\setlist{parsep = -0em, itemsep = 0.25em}
\newcommand{\N}{\mathbb{N}}
\newcommand{\A}{\mathcal{A}}
\newcommand{\one}{\mathbf{1}}
\newcommand{\btheta}{\boldsymbol{\theta}}
\renewcommand{\S}{{S^1}}
\newcommand{\ind}{\mathbbm{1}}
\newcommand*\conj[1]{\overline{#1}}
\newtheorem{theorem}{Theorem}[section]
\newtheorem{lemma}[theorem]{Lemma}
\newtheorem{definition}[theorem]{Definition}
\begin{document}
 
\title{Towards Antisymmetric Neural Ansatz Separation}
\date{}
\author[a]{Aaron Zweig}
\author[a,b]{Joan Bruna}

\affil[a]{Courant Institute of Mathematical Sciences, New York
  University, New York}
\affil[b]{Center for Data Science, New York University}

\maketitle

\begin{abstract}
We study separations between two fundamental models (or \emph{Ansätze}) of antisymmetric functions, that is, functions $f$ of the form $f(x_{\sigma(1)}, \ldots, x_{\sigma(N)}) = \text{sign}(\sigma)f(x_1, \ldots, x_N)$, where $\sigma$ is any permutation. 
These arise in the context of quantum chemistry, and are the basic modeling tool for wavefunctions of Fermionic systems. 
Specifically, we consider two popular antisymmetric Ansätze: the Slater representation, which leverages the alternating structure of determinants, and the Jastrow ansatz, which augments Slater determinants with a product by an arbitrary symmetric function. We construct an antisymmetric function in $N$ dimensions that can be efficiently expressed in Jastrow form, yet provably cannot be approximated by Slater determinants unless there are exponentially (in $N^2$) many terms. This represents the first explicit quantitative separation between these two Ansätze.

\end{abstract}

\section{Introduction}

Neural networks have proven very successful in parametrizing non-linear approximation spaces in high-dimensions, thanks to the ability of neural architectures to leverage the physical structure and symmetries of the problem at hand, while preserving universal approximation. The successes cover many areas of engineering and computational science, from computer vision \citep{krizhevsky2017imagenet} to protein folding \citep{jumper2021highly}.

In each case, modifying the architecture (e.g. by adding layers, adjusting the activation function, etc.) has intricate effects in the approximation, statistical and optimization errors. An important aspect in this puzzle is to first understand the approximation abilities of a certain neural architecture against a class of target functions having certain assumed symmetry~\citep{lecun1995convolutional,cohen2018spherical}.  For instance, symmetric functions that are permutation-invariant, ie $f(x_{\sigma(1)}, \ldots, x_{\sigma(N)}) = f(x_1, \ldots x_N)$ for all $x_1, \ldots, x_N$ and all permutations $\sigma: \{1, N\} \to \{1, N\}$ can be universally approximated by several neural architectures, e.g DeepSets~\citep{zaheer2017deep} or Set Transformers~\citep{lee2019set}; their approximation properties \citep{zweig2022exponential} thus offer a first glimpse on their efficiency across different learning tasks. 

In this work, we focus on quantum chemistry applications, namely characterizing ground states of many-body quantum systems. These are driven by the fundamental Schröndinger equation, an eigenvalue problem of the form
$$H \Psi = \lambda \Psi~,$$
where $H$ is the Hamiltonian associated to a particle system defined over a product space $\Omega^{\otimes N}$, and $\Psi$ is the \emph{wavefunction}, a complex-valued function $\Psi: \Omega^{\otimes N} \to \mathbb{C}$ whose squared modulus $|\Psi(x_1, \ldots, x_N)|^2$ describes the probability of encountering the system in the state $(x_1, \ldots, x_N) \in \Omega^{\otimes N}$. 
A particularly important object is to compute the \emph{ground state}, associated with the smallest eigenvalue of $H$. 
On Fermionic systems, the wavefunction satisfies an additional property, derived from Pauli's exclusion principle: the wavefunction is \emph{antisymmetric}, meaning that 
\begin{equation}
    \Psi(x_{\sigma(1)}, \ldots, x_{\sigma(N)}) = \text{sign}(\sigma) \Psi(x_1, \ldots, x_N)~.
\end{equation}

The antisymmetric constraint is an uncommon one, and therefore demands unique architectures to enforce it.  The quintessential antisymmetric function is a \emph{Slater determinant}~\citep{szabo2012modern}, that we now briefly describe. Given functions $f_1, \dots, f_N: \Omega \to \mathbb{C}$, they define a rank-one tensor mapping $f_1 \otimes \dots \otimes f_N : \Omega^{\otimes N} \to \mathbb{C}$ by $(f_1 \otimes \dots \otimes f_N)(x_1, \ldots, x_N):= \prod_{j\leq N} f_j(x_j)$. The Slater determinant is then the orthogonal projection of a tensor rank one function into antisymmetric space.  In other words, the rank one tensor $f_1 \otimes \dots \otimes f_N$ is projected to $\mathcal{A}(f_1 \otimes \dots \otimes f_N) :=\frac{1}{N!}\sum_{\sigma \in S_N} (-1)^\sigma f_{\sigma(1)} \otimes \dots \otimes f_{\sigma(N)}$. In coordinates, this expression becomes
\begin{align*}
\mathcal{A}(f_1 \otimes \dots \otimes f_N)(x) = \frac{1}{N!} \text{det}\begin{bmatrix} f_1(x_1) & \ldots & f_1(x_N) \\
f_2(x_1) & \ldots & f_2(x_N) \\
& \dots & \\
f_N(x_1) & \ldots & f_N(x_N) 
\end{bmatrix}~,
\end{align*}
which shows that is antisymmetric following the alternating property of the determinant. 

The Slater Ansatz is then simply a linear combination of several Slater determinants, of the form $F(x)=\sum_{l\leq L} \mathcal{A}(f_1^l \otimes \dots \otimes f_N^l)$, similarly as a shallow (Euclidean) neural network formed as a linear combination of simple non-linear ridge functions. 
While this ansatz defines a universal approximation class for antisymmetric functions (as a direct consequence of Weierstrass universal approximation theorems for polynomials), the approximation rates will generally be cursed by the dimensionality of the input space, as is also the case when studying lower bounds for standard shallow neural networks \cite{maiorov1998approximation}. 

In the case of particles in $\Omega = \mathbb{R}$ or $\mathbb{C}$, it is classical that all antisymmetric functions can be written as a product of a symmetric function with the Vandermonde (see Section~\ref{sec:prelim}).  This setting is generally considered much easier than settings with higher-dimensional particles, as this Vandermonde factorization no longer applies above one dimension, though there are still ansätze that mimic this formulation~\citep{han2019solving}.

A more powerful variant is the Jastrow Ansatz, where each Slater determinant is `augmented' with a  symmetric prefactor~\citep{jastrow1955many}, ie $G = p \cdot \mathcal{A}(f_1 \otimes \dots \otimes f_N)$ where $p$ is permutation-invariant. Clearly, $G$ is still antisymmetric, since the product of an antisymmetric function with a symmetric one is again antisymmetric, but grants more representational power. Other parametrisations building from Jastrow are popularly used in the literature, e.g. \emph{backflow} \citep{feynman1956energy}, which models particle interactions by composing the Slater determinant with a permutation equivariant change of variables. 
Among practitioners, it is common knowledge that the Slater Ansatz is inefficient, compared to Jastrow or other more advanced parameterizations.  Yet, there is no proven separation evinced by a particular hard antisymmetric function. We note that the Jastrow ansatz is strictly generalized by backflow (see Section \ref{sec:prelim}), so separations between Slater and Jastrow would have immediate consequences for separations from the stronger architectures as well.

In this work, we are interested in understanding quantitative differences in approximation power between these two classes. Specifically, we wish to find antisymmetric target functions $G$ such that $G$ can be efficiently approximated with the Jastrow ansatz, i.e. approximated to $\epsilon$ error in the infinity norm with some modest dependence on the parameters $N$ and $\epsilon$ by a single Slater determinant with a single symmetric prefactor, yet no Slater representation can approximate $G$ for reasonably small widths.  This question mirrors the issue of depth separation in deep learning theory, where one seeks functions that exhibit a separation between, for example, two layer and three layer networks~\citep{eldan2016power}, as well as recent separations between classes of symmetric representations \citep{zweig2022exponential}.

\paragraph{Main Contribution:}
We prove the first explicit separation between the two ansätze, and construct an antisymmetric function $G$ such that:
\begin{itemize}
    \item In some norm, $G$ cannot be approximated better than constant error by the Slater ansatz, unless there are $\Omega(e^{N^2})$ many Slater Determinants.
    \item $G$ can be written in the Jastrow ansatz with neural network widths bounded by $poly(N)$ using analytic activation functions
\end{itemize}
\section{Related Work}

\subsection{Machine Learning for Quantum Chemistry}

Numerous works explore how to use neural network parameterizations effectively to solve Schr\"odinger's equation.  These include works in first quantization, which try to parameterize the wavefunction $\Psi$ directly~\citep{pfau2020ab,hermann2020deep}, and second quantization, where the wavefunction is restricted to an exponentially large but finite-dimensional Hilbert space, and then the problem is mapped to a spin system~\citep{carleo2017solving}.

\subsection{Antisymmetric Ansätze}

Numerous architectures enforce antisymmetry.  In this work we focus primarily on the Slater ansatz and Jastrow ansatz, but others exist and are used in practice, with associated guarantees of universality~\citep{han2019universal}.  The backflow ansatz enables interaction between particles while preserving antisymmetry~\citep{luo2019backflow}.  More recently, an ansatz that introduces hidden additional fermions was introduced in~\cite{moreno2021fermionic}.

\subsection{Architecture Separations}

A large body of work studies the difference in expressive power between different neural network architectures.  These works frequently center on the representational gap between two-layer and three-layer networks~\citep{eldan2016power,daniely2017depth}.  Relatedly, several works have considered the representational power of different networks architectures constrained to be symmetric~\citep{wagstaff2019limitations,wagstaff2022universal,zweig2022exponential}.

The most closely related work to ours is~\cite{huang2021geometry}, which proves a non-constructive limit on the representability of the backflow ansatz, but requires exact representation rather than approximation in some norm.  Conversely, \cite{hutter2020representing} demonstrates the universality of a single backflow ansatz, but requires a highly discontinuous backflow transform that may not be efficiently representable with a neural network.
\section{Preliminaries and Main Theorem}
\label{sec:prelim}
\subsection{Antisymmetric Ansätze}
We consider $N$ particles restricted to the complex unit circle. That is, $x \in \Omega^N$ with $\Omega=\{ z \in \mathbb{C}; |z|=1\}$.  We denote the tensor product $\otimes$ where, for $f,g: \Omega \rightarrow \mathbb{C}$, we have $f\otimes g : \Omega^{2} \rightarrow \mathbb{C}$ such that $(f\otimes g) (x, y) = f(x) g(y)$.
Given a permutation $\sigma \in \mathcal{S}_N$, and $x \in \Omega^N$, we denote by $\sigma.x  = ( x_{\sigma(1)}, \ldots, x_{\sigma(N)}) \in \Omega^N$ the natural group action. 

Let $\A$ denote the antisymmetric projection, defined via:
\begin{align}
    \A(\phi_1 \otimes \dots \otimes \phi_N) = \frac{1}{N!} \sum_{\sigma \in S_N} (-1)^\sigma \phi_{\sigma(1)} \otimes \dots \otimes \phi_{\sigma(N)}
\end{align}
So up to rescaling we can consider Slater determinants as terms of the form $\A(\phi_1 \otimes \dots \otimes \phi_N)$.  Each $\phi_n$ is called an \emph{orbital}.  Intuitively, a Slater determinant is the simplest way to write an antisymmetric function, inheriting the antisymmetry property from the determinant itself.

Thus the Slater determinant ansatz with $L$ terms can be written as:
\begin{align}
    F = \sum_{l=1}^L \A(f_1^l \otimes \dots \otimes f_N^l)~.
\end{align}
Similarly, the Jastrow ansatz (with only one term) \citep{jastrow1955many} can be written as:
\begin{align}
    G = p \cdot \A(g_1 \otimes \dots \otimes g_N)
\end{align}
where $p$ is a symmetric function, namely $p(\sigma.x) = p(x)$ for any $\sigma$ and $x$. It is immediate to verify that $G$ is antisymmetric. Finally, the Backflow ansatz \citep{feynman1956energy} (considering again a single term) is defined as 
\begin{align}
    \widetilde{G}(x) = \mathcal{A}(\tilde{g}_1 \otimes \dots \tilde{g}_N)(\Phi(x))~,
\end{align}
where $\Phi: \Omega^N \to \widetilde{\Omega}^N$ is an equivariant flow, satisfying $ \Phi(\sigma.x) = \sigma.\Phi(x)$, and where in general $\widetilde{\Omega}$ might be higher-dimensional than $\Omega$. 

In particular, we verify that 
\begin{align}
\Phi : \Omega^N &\to (\mathbb{C} \times \Omega)^N, \\
x&\mapsto \Phi(x) := ( (p^{1/N}(x); x_1), \dots, (p^{1/N}(x); x_N))   
\end{align}
 is equivariant. Given a collection of $N$ orbitals $\phi_1, \ldots \phi_N: \Omega \to \mathbb{C}$, we verify that the Jastrow Ansatz $G = p \cdot \A(g_1 \otimes \dots \otimes g_N)$ can be written as 
$G= \mathcal{A}(\tilde{g}_1 \otimes \dots \tilde{g}_N) \circ \Phi$, with 
$\tilde{g}_j : \mathbb{C} \times \Omega \to \mathbb{C}$ defined as $\tilde{g}_j( z, x)= z \cdot g_j(x)$. 
Thus, the Jastrow ansatz can be recovered as a particular case of the more general backflow ansatz. Therefore, quantitative separations between Slater and Jastrow automatically imply the same rates for Backflow. 


\subsection{Inner Products}

To measure the distance between the Slater Determinant ansatz and the Jastrow ansatz, we need an appropriate norm.

For univariate functions $f, g: \S \rightarrow \mathbb{C}$, define the inner product:
\begin{align}
    \langle f, g \rangle := \frac{1}{(2\pi)} \int_0^{2\pi} f(e^{i\theta}) \conj{g(e^{i\theta})} d\theta ~.
\end{align}
For functions acting on $N$ particles, $F, G: (\S)^N \rightarrow \mathbb{C}$, the associated inner product is
\begin{align}
    \langle F, G \rangle := \frac{1}{(2\pi)^N} \int_{[0,2\pi]^N} F(e^{i\btheta}) \conj{G(e^{i\btheta})} d\btheta ~.
\end{align}

Let us introduce the notation that for $x \in \mathbb{C}^N$ and $\alpha \in \N^N$, $x^\alpha = \prod_{i=1}^N x_i^{\alpha_i}$.  Then the orthogonality of the Fourier basis may be written as $\langle x^\alpha, x^\beta \rangle = \delta_{\alpha \beta}$.

\subsection{Theorem Statement}

With this, we may state our main result explicitly:

\begin{theorem}
    Consider a Slater ansatz with $L$ terms:
\begin{align}
    F = \sum_{l=1}^L \A(f_1^l \otimes \dots \otimes f_N^l)
\end{align}
parameterized by orbitals $f_n^l : \S \rightarrow \mathbb{C}$,
        and a Jastrow ansatz
    \begin{align}
        \hat{G} = p\cdot \A(g_1 \otimes \dots \otimes g_N)
    \end{align}
    parameterized by orbitals $g_n : \S \rightarrow \mathbb{C}$ and a symmetric Jastrow factor $p: (\S)^N \rightarrow \mathbb{C}$.
    
    Let $N\geq 6$ and $1 > \epsilon > 0$.  Then there is a hard function $G$ with $\|G\|=1$, such that $\hat{G}$ parameterized by a neural network with width, depth, and weights scaling in $O(poly(N, 1/\epsilon))$ that can approximate $G$:
    \begin{align}
        \|\hat{G}-G\|_\infty < \epsilon
    \end{align}
    but, for a number of Slater determinants $L \leq e^{N^2}$: 
    \begin{align}
    \min_F \|F - G\|^2 \geq \frac{3}{10}~.
\end{align}

\end{theorem}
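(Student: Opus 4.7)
The plan is to analyze both ansätze via Fourier expansion on the $N$-torus. Antisymmetric functions admit a canonical orthonormal basis of antisymmetrized monomials $\A(x^S)$ indexed by size-$N$ subsets $S \subset \mathbb{Z}$. Expanding orbitals as Fourier series $f_j = \sum_k \hat{f}_{j,k} x^k$, a single Slater determinant decomposes as $\A(f_1 \otimes \cdots \otimes f_N) = \sum_S \det(\hat{f}_{j,s})_{j \in [N], s \in S} \cdot \A(x^S)$, so its coefficient on $\A(x^S)$ is a Plucker coordinate (maximal minor) of the $N \times \infty$ orbital coefficient matrix. An $L$-term Slater ansatz thus corresponds to a sum of $L$ Plucker vectors in the antisymmetric tensor space $\bigwedge^N L^2(\S)$, and the approximation question reduces to a question of ``skew tensor rank'' in this space.

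For the upper bound, I would exhibit $G$ directly in Jastrow form $G = p \cdot \A(g_1 \otimes \cdots \otimes g_N)$ with orbitals $g_j(x) = x^{j-1}$ (so the Slater factor is proportional to the Vandermonde $V$) and a symmetric factor $p$ engineered for both low neural-network complexity and a rich Fourier expansion in $G$. Natural candidates include $p = \prod_j \phi(x_j)$ or powers of power-sum symmetric functions $\bigl(\sum_j \phi(x_j)\bigr)^k$, which are easily represented by DeepSets-style networks with analytic activations; standard univariate polynomial approximation bounds then yield $\epsilon$ error in $\|\cdot\|_\infty$ with poly$(N, 1/\epsilon)$ parameters.

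For the lower bound, I would identify a ``hard'' subspace $W$ spanned by $\A(x^S)$ for $S$ in a carefully chosen family $\mathcal{S}$ capturing the Fourier support of $G$, so that $\|P_W G\|$ is bounded below by a constant. Then $\|F - G\|^2 \geq (\|P_W G\| - \|P_W F\|)^2$, and it suffices to upper-bound $\|P_W F\|$ uniformly for every $L$-term Slater $F$ with $L \leq e^{N^2}$. This will be the main technical step: each Slater contributes Plucker coefficients $\det(\hat{f}_{j,s})_{j,s \in S}$ for $S \in \mathcal{S}$, which can be controlled via Hadamard's inequality, the Gram determinant identity $\|\A(f_1 \otimes \cdots \otimes f_N)\|^2 = \frac{1}{N!}\det(\langle f_i, f_j\rangle)$, and Cauchy–Binet. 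The essential combinatorial content is that with only $L \leq e^{N^2}$ rank-one Plucker terms one cannot simultaneously match the large number of independent directions in $\mathcal{S}$ that $G$ excites.

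The hardest part is engineering $p$ and $\mathcal{S}$ in tandem so that (i)~$G = p \cdot V$ admits the compact Jastrow form with poly$(N, 1/\epsilon)$ neural-network approximation, and (ii)~the family $\mathcal{S}$ has dense enough combinatorial structure to force the $e^{N^2}$ barrier while $G$ has provably balanced mass across $\mathcal{S}$. The exponent $N^2$ strongly suggests that the construction exploits the $N^2$-dimensional parameter count of a Grassmannian such as $\mathrm{Gr}(N, 2N)$ or a closely related combinatorial variety, so that capturing generic directions in $W$ requires exponentially many Plucker summands. The constant $3/10$ and the hypothesis $N \geq 6$ would then emerge from careful bookkeeping of the subspace projection estimates and the small polynomial slack needed for the inequalities to close.
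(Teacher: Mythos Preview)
Your general Fourier/Plücker framework is sound and matches the paper's setup, but there are two genuine gaps where the plan would stall. First, your candidate Jastrow factors are too weak to produce a separation: if $p(x)=\prod_j \phi(x_j)$ then $p\cdot V = \prod_j \phi(x_j)\cdot \det[x_i^{\,j-1}] = \det[\phi(x_i)x_i^{\,j-1}]$ is \emph{itself a single Slater determinant}, so no lower bound is possible. Power-sum powers are less trivially bad, but you give no argument that they yield a function whose antisymmetric Fourier mass is spread across exponentially many ``independent'' directions; this is exactly the hard part, and it cannot be left to bookkeeping. Second, your proposed lower-bound mechanism---uniformly bounding $\|P_W F\|$ for all $L$-term Slater sums via Hadamard/Gram/Cauchy--Binet---cannot work as stated, because the orbitals $f_n^l$ are completely unconstrained in norm. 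Any bound of the form ``each Plücker coordinate is at most $\ldots$'' fails: a single Slater term can have arbitrarily large projection onto any fixed basis vector. What is needed is a \emph{rank} obstruction, not a \emph{norm} obstruction.

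The paper supplies precisely the two missing ideas. For the construction, it uses a Pfaffian identity of Sundquist and Ishikawa--Okada--Tagawa expressing $\sum_{\lambda\text{ doubly even}} s_\lambda\cdot V$ as a symmetric rational prefactor $\prod_{i<j}(1-x_i^2x_j^2)^{-1}$ times an explicit Slater determinant; damping by $x\mapsto rx$ gives a bona fide Jastrow function $G$ whose Schur expansion is supported exactly on doubly even partitions. For the lower bound, it exploits the combinatorial fact that for doubly even $\lambda$ the exponent set $\lambda+\delta$ splits as $(\gamma+\mathbf{1})\cup\gamma$ with $\gamma$ even: flattening the antisymmetric tensor into a matrix indexed by (odd half-tuples)$\times$(even half-tuples) makes $M(G)$ \emph{diagonal}, while each rank-one tensor $f_1\otimes\cdots\otimes f_N$ flattens to a rank-one matrix, so $M(F)$ has rank at most $L\cdot N!$. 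The bound then follows from SVD and a partition-counting estimate $p(N^4)\gtrsim e^{2N^2}$, which is where the $e^{N^2}$ threshold actually comes from---not Grassmannian dimension. Without the doubly-even/diagonal trick (or an equivalent rank-reduction device), your outline has no way to convert ``$L$ Plücker summands'' into a quantitative inapproximability statement.
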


Note the constraint $\|G\| = 1$ is just to prevent vacuous solutions by scaling $G$ to be arbitrarily big.  We will describe the exact network structure of the hard function $G$ in Section~\ref{sec:representing-g}.

We also remark that $N^{O(N)} \ll e^{N^2}$, so this is a true separation.  Furthermore, the construction only requires $N^{O(N)}$ parameters due to the analysis of a generic complex ReLU activation.  For particular choices of the activation, the network only requires $poly(N)$ parameters.
Finally, the restriction of $\hat{G}$ to only have one determinant is artificial, here to demonstrate the nature of the separation.  In practice, the Jastrow ansatz allows for multiple determinant terms in learning.

\subsection{Schur Polynomials}

To build up the difficult function $G$, we use several identities related to the symmetric Schur polynomials.  First, we introduce partitions as they will be used to index Schur polynomials:

\begin{figure}
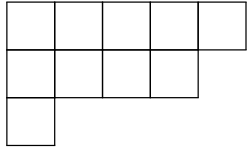
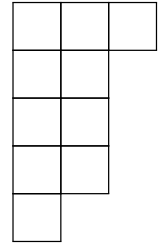

     \centering
     \begin{subfigure}[b]{0.3\textwidth}
         \centering
         \ytableausetup{centertableaux}
         \ydiagram{5,4,1}
         \caption{$\lambda = (5,4,1)$}
         \label{fig:y equals x}
     \end{subfigure}
     \hfill
     \begin{subfigure}[b]{0.3\textwidth}
         \centering
         \ytableausetup{centertableaux}
         \ydiagram{3,2,2,2,1}
         
         \caption{$\lambda' = (3,2,2,2,1)$}
         \label{fig:three sin x}
     \end{subfigure}
        \caption{Example of Young diagram and conjugate partition}
        \label{fig:young}
\end{figure}

\begin{definition}
    An \emph{integer partition} $\lambda$ is non-increasing, finite sequence of positive integers $\lambda_1 \geq \lambda_2 \geq \dots \geq \lambda_k$.  The weight of the partition is given by $|\lambda| = \sum_{i=1}^k \lambda_i$.  The length of a partition $l(\lambda)$ is the number of terms in the sequence.  We call a partition \emph{even} if every $\lambda_i$ is even.
\end{definition}
Partitions can be represented by their Young diagram, see Figure~\ref{fig:young}.  Furthermore, we will need the notion of a conjugate partition:
\begin{definition}
    Given a partition $\lambda$, the \emph{conjugate partition} $\lambda'$ is gotten by reflecting the Young diagram of $\lambda$ along the line $y=-x$.  We call a partition \emph{doubly even} if $\lambda$ and $\lambda'$ are both even.
\end{definition}

First, we introduce the Vandermonde written as:
\begin{align}
    V(x) = \prod_{i<j} (x_j - x_i)~.
\end{align}
Then we denote the Schur polynomial indexed by partition $\lambda$ as:
\begin{align}
    s_\lambda(x) := \begin{cases} 
      V(x)^{-1} \text{det}\left[x_i^{\lambda_j + N - j}\right] & l(\lambda) \leq N ~,\\
      0 & l(\lambda) > N~.
   \end{cases}
\end{align}
Given two partitions $\lambda$ and $\mu$, the following fact follows easily from linearity of the determinant:
\begin{align}
    \langle s_\lambda \cdot V, s_\mu \cdot V \rangle = N! \cdot \delta_{\lambda \mu}~.
\end{align}
We will in the sequel assume $N$ is even, then we can cite the following formal identity of a particular Pfaffian:
\begin{theorem}[\cite{sundquist1996two} Theorem 5.2, \cite{ishikawa2006generalizations} Corollary 4.2]\label{thm:pfaffian}
\begin{align}
    \sum_{\lambda \text{ doubly even}} s_\lambda \cdot V & = \mathrm{Pf}\left[\frac{x_i - x_j}{1-x_i^2x_j^2} \right]\\
    & = \prod_{i<j} \frac{1}{1-x_i^2x_j^2} \cdot N! \cdot \A(\phi_1 \otimes \dots \otimes \phi_N)~,
\end{align}
where we set the $\phi$ maps to be:
\begin{align}
    \phi_j(x_i) = \begin{cases} 
      x_i(x_i^2)^{N/2-j}(1+x_i^4)^{j-1} & 1 \leq j \leq N/2 \\
      (x_i^2)^{N-j}(1+x_i^4)^{j-1-N/2} & N/2+1 \leq j \leq N
   \end{cases}
\end{align}
\end{theorem}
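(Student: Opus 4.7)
The identity decomposes into two equalities: (a) the sum of $s_\lambda \cdot V$ over doubly-even partitions equals the Pfaffian, and (b) the Pfaffian factors as a rational prefactor times $\det[\phi_j(x_i)] = N!\cdot \A(\phi_1 \otimes \dots \otimes \phi_N)$. I would handle them separately. For (a), my plan is to use a minor-summation identity for Pfaffians in the style of Ishikawa-Wakayama. Starting from the alternant formula $s_\lambda(x) \cdot V(x) = \det\bigl[x_i^{\lambda_j + N - j}\bigr]_{i,j}$, I reparametrize by the strict sequence $\mu_j := \lambda_j + N - j$, so that the double-evenness condition on $\lambda$ (both $\lambda$ and the conjugate $\lambda'$ have only even parts) becomes a parity/interlacing constraint on $\{\mu_j\}$. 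The minor-summation formula packages a signed sum of $N \times N$ minors of an $N \times \infty$ matrix, over such admissible index tuples, as the Pfaffian of an $N \times N$ antisymmetric matrix whose entries are bilinear sums over pairs of columns. Summing the resulting geometric series in the column index yields precisely $(x_i - x_j)/(1 - x_i^2 x_j^2)$.

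\textbf{Second equality.} For (b), the plan is to clear denominators and compare polynomials. Multiplying through by $\prod_{i<j}(1 - x_i^2 x_j^2)$ reduces the claim to the polynomial identity
\begin{equation*}
\mathrm{Pf}\!\left[\frac{x_i - x_j}{1 - x_i^2 x_j^2}\right] \cdot \prod_{i<j}(1 - x_i^2 x_j^2) \;=\; \det[\phi_j(x_i)]\,,
\end{equation*}
where both sides are antisymmetric polynomials in $(x_1, \ldots, x_N)$. I would expand the Pfaffian as a signed sum over perfect matchings of $\{1, \ldots, N\}$, clear the denominators pair by pair, and regroup the antisymmetric polynomial contributions. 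An equivalent algebraic route is to work on the right: each factor $(1+x^4)^{j-1}$ (for $j \leq N/2$) or $(1+x^4)^{j-1-N/2}$ (for $j > N/2$) expands as a binomial combination of powers of $x^2$; combined with the $(x^2)^{N/2-j}$ and $(x^2)^{N-j}$ shifts and the lone $x$ in the first half-block, column operations should bring $[\phi_j(x_i)]$ into a form that reproduces the cleared-Pfaffian polynomial column by column.

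\textbf{Main obstacle.} The bottleneck is the second equality: the orbitals $\phi_j$ are fine-tuned to the doubly-even structure, and matching the perfect-matching expansion of the Pfaffian to the permutation expansion of the determinant requires delicate combinatorial bookkeeping. In particular, the cross-terms produced by clearing $\prod_{i<j}(1 - x_i^2 x_j^2)$ must cancel precisely against the cross-terms produced by expanding the $(1+x^4)^{\cdot}$ factors in the columns of $[\phi_j(x_i)]$, leaving the correct signed sum. This is exactly the technical content of the cited results of Sundquist and of Ishikawa-Okada-Tagawa-Wakayama, whose de Bruijn-type Pfaffian/determinant manipulations give a systematic framework for Littlewood-style identities of this kind; my plan would ultimately rely on invoking (or directly adapting) their machinery once the problem has been reduced to the polynomial form above.
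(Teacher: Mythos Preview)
The paper does not prove this theorem; it is quoted verbatim as an external result from \cite{sundquist1996two} and \cite{ishikawa2006generalizations}, so there is no in-paper argument to compare against. Your outline is an accurate summary of how those references proceed: the first equality via a de~Bruijn/Ishikawa--Wakayama minor-summation identity applied to the alternants $\det\bigl[x_i^{\lambda_j+N-j}\bigr]$ (the doubly-even constraint on $\lambda$ becoming the odd--even pairing $\mu_{2k-1}=\mu_{2k}+1$ on the exponents, after which the bilinear column sums collapse to the geometric series $(x_i-x_j)/(1-x_i^2x_j^2)$), and the second equality via row/column operations on $\det[\phi_j(x_i)]$ after clearing denominators. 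Your stated ``main obstacle'' --- that matching the Pfaffian expansion to the determinant of the specific $\phi_j$'s requires the combinatorial bookkeeping of the cited papers --- is not a gap but exactly the paper's own stance: later, when it needs an alternate orbital form $\psi_j^{(r)}$, it simply invokes ``the row transforms given in the proof of Lemma~3.4 in \cite{ishikawa2006generalizations}'' rather than reproducing them.
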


We note the requirement for $N$ to be even is to ensure the existence of doubly even partitions.  We will leverage this identity to show the existence of a function that requires many Slater determinants to approximate, but can nevertheless be written in the form of the Jastrow ansatz.
\section{Proof Sketch}

We give here a sketch of the tactic of the proof.  Suppose we consider the domain of our antisymmetric functions to be only a finite set of points.  Equivalently, we simplify the problem to a question of antisymmetric tensor products of vectors instead of functions.  Then inapproximability of Slater determinants amounts to finding a high-rank antisymmetric tensor, that cannot be approximated in the $l_2$ norm by a small number of rank-one tensors.

Let's further simplify, and forget about antisymmetry for a moment.  The usual trick for questions of high-rank tensors, is to flatten all the tensors and rewrite them as matrices, so the problem reduces to approximating a high rank matrix by a low rank one.  This is handily solved by SVD.  To make the SVD tractable, the high-rank matrix is typically taken to be diagonal.

In our setting, we cannot simply choose a tensor that will be diagonal after flattening, because the constraints of antisymmetry will enforce certain matrix elements to be equal.  It turns out we can focus on a particular subtensor, where it's possible to flatten to a diagonal matrix, while nevertheless keeping our hard function representable by the Jastrow ansatz.

Indeed, the hard function $G$ can be written exactly in the Jastrow ansatz by the above identity:

\begin{align}
    G & := \frac{C}{\sqrt{N!}} \sum_{\lambda \text{ doubly even}} r^{\left(|\lambda| + \frac{N(N-1)}{2}\right)} s_\lambda \cdot V \\ &= C \sqrt{N!} \cdot  \prod_{i<j} \frac{1}{1-r^4x_i^2x_j^2} \cdot \A(\phi_1^{(r)} \otimes \dots \otimes \phi_N^{(r)})~,
\end{align}

where $\phi_j^{(r)}(z) = \phi_j(rz)$ and $C, r$ are constants with $|r| < 1$.  The second equality gives a Jastrow ansatz.  So it remains to demonstrate why the first expression in terms of Schur polynomials is difficult to approximate, i.e. why it encodes a high-rank antisymmetric tensor.

Let us explain the significance of restricting the support to doubly even Schur functions.  Let $\delta = (N-1, N-2, \dots, 1, 0)$.  Then by simply canceling the Vandermonde factor, for an appropriate partition $\lambda$, we have:
$$
s_\lambda(x) \cdot V(x) =  \text{det}\left[x_i^{\lambda_j + \delta_j}\right]
$$

Furthermore, if $\lambda$ is doubly even, then $\lambda + \delta$ will take the form $(2a + 1, 2a, 2b+1, 2b, \dots)$ with alternating odd and even terms with the odd term one above the subsequent even term.  See Figure~\ref{fig:three graphs} for an example.

The significance of this structure is that, by knowing only the odd values of $\lambda + \delta$, the even values are determined.  We will make essential use of this property to flatten an antisymmetric tensor into a matrix with sets of odd indices as rows, and sets of even indices as columns.  For this matrix, the above structure implies diagonality, and from there we can proceed with a usual proof of low-rank approximation from SVD.

The complete proof is given in Appendix~\ref{sec:proof}.

\begin{figure}
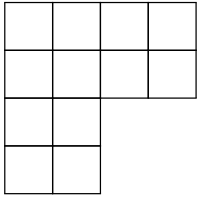
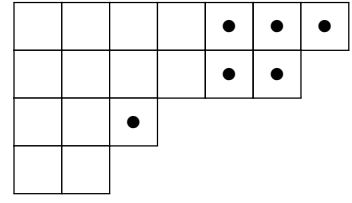

     \centering
     \begin{subfigure}[b]{0.3\textwidth}
         \centering
         \ytableausetup{centertableaux}
         \ydiagram{4,4,2,2}
         \caption{$\lambda = (4,4,2,2)$}
         \label{fig:y equals x}
     \end{subfigure}
     \hfill
     \begin{subfigure}[b]{0.3\textwidth}
         \centering
         \ytableausetup{centertableaux}
         
         \ydiagram[*(white) \bullet]
         {4+3,4+2,2+1,2+0}
         *[*(white)]{4,4,2,2}
         
         \caption{$\lambda + \delta = (7,6,3,2)$}
         \label{fig:three sin x}
     \end{subfigure}
     \hfill
        \caption{$\lambda$ and $\lambda + \delta$ for $\lambda$ doubly even. \\ Note that $\lambda + \delta \sim (6+1,2+1) \cup (6,2).$}
        \label{fig:three graphs}
\end{figure}
\section{Proof}\label{sec:proof}

\subsection{Building the hard function $G$}

For any $r \in \mathbb{R}$ with $|r| < 1$, by mapping $x \mapsto rx$ and using homogeneity of $s_\lambda$ and $V$, we define $G$ via the generating function identity:

\begin{align}
    G := \frac{C}{\sqrt{N!}} \sum_{\lambda \text{ doubly even}} r^{\left(|\lambda| + \frac{N(N-1)}{2}\right)} s_\lambda \cdot V &= C \sqrt{N!} \cdot  \prod_{i<j} \frac{1}{1-r^4x_i^2x_j^2} \cdot \A(\phi_1^{(r)} \otimes \dots \otimes \phi_N^{(r)})~,
\end{align}
where
\begin{align}
    \phi_j^{(r)}(x_i) = \begin{cases} 
      rx_i((rx_i)^2)^{N/2-j}(1+(rx_i)^4)^{j-1} & \text{ if } 1 \leq j \leq N/2 \\
      ((rx_i)^2)^{N-j}(1+(rx_i)^4)^{j-1-N/2} & \text{ if } N/2+1 \leq j \leq N
   \end{cases}
\end{align}
where $C$ is chosen to normalize $G$.

Note that from the RHS, it is clear that $G$ is written in the form of a Jastrow ansatz.  We will discuss efficiency of computing $G$ further below.

It remains to choose $r$ and $C$ such that $\|G\| = 1$.  Note that,  if $p(k)$ denotes the number of partitions of $k$, and and $p'(k)$ denotes the number of doubly even partitions of $k$ , it's easy to see that
\begin{align}
    p'(k) = \begin{cases} 
      p(k/4) & k \equiv 0 \mod 4 \\
      0 & else
   \end{cases}
\end{align}
So we calculate by orthogonality:
\begin{align}
    \|G\|^2 & = \frac{C^2}{N!} \left\langle \sum_{\lambda \text{ doubly even}} r^{\left(|\lambda| + \frac{N(N-1)}{2}\right)} s_\lambda \cdot V, \sum_{\mu \text{ doubly even}} r^{\left(|\mu| + \frac{N(N-1)}{2}\right)} s_\mu \cdot V \right\rangle \\
    &= C^2 r^{N(N-1)} \sum_{\lambda \text{ doubly even}} r^{2|\lambda|} \\
    &= C^2 r^{N(N-1)} \sum_{k=0}^\infty r^{2k} p'(k) \\
    &= C^2 r^{N(N-1)} \sum_{k=0}^\infty r^{8k} p(k) \\
    &= C^2 r^{N(N-1)} \prod_{k=1}^\infty \frac{1}{1-r^{8k}}
\end{align}
where in the last line we employ the generating function for partition numbers.
Then setting $C = \left(r^{-N(N-1)}\prod_{k=1}^\infty 1-r^{8k}\right)^{1/2}$ gives $\|G\| = 1$.

\subsection{From Tensors to Matrices}

The point of choosing $G$ in this way, is it enables a simple flattening argument, where we can reduce comparing tensors to comparing matrices.

Note again that terms of the form $x^\alpha$ for $\alpha \in \N^N$ are orthonormal.  Hence, we derive an initial lower bound by Bessel's inequality:
\begin{align}
    \|F - G\|^2 & \geq \sum_{\alpha \in \N^N} \left(\left\langle F, x^\alpha \right\rangle - \left\langle G, x^\alpha \right\rangle\right)^2~.
\end{align}
Note that by antisymmetry of $F$ and $G$, if $\alpha$ doesn't have distinct elements then
\begin{align}
    \left\langle F, x^\alpha \right\rangle = \left\langle G, x^\alpha \right\rangle = 0~.
\end{align}
To see this, suppose $\alpha_1 = \alpha_2$, and let $P_{12}$ be the permutation operator defined by 
\begin{align}
P_{12}F(x_1, x_2, x_3, \dots) = F(x_2, x_1, x_3, \dots)
\end{align}
It's easy to see $P_{12}$ is a symmetric operator with respect to $\langle \cdot, \cdot \rangle$.  Then for any antisymmetric function $H$, 
\begin{align}
    \langle H, x^\alpha \rangle & = \langle H, P_{12}x^\alpha \rangle\\
    & = \langle P_{12} H, x^\alpha \rangle \\
    & = -\langle H, x^\alpha \rangle
\end{align}
which implies $\langle H, x^\alpha \rangle = 0$.

Furthermore, let us define the equivalence class $\sim$ as via $\alpha \sim \alpha'$ if there exists a permutation $\pi$ such that $\alpha = \pi \circ \alpha'$.  Then by similar reasoning, $\alpha \sim \alpha'$ implies:
\begin{align}
    \left\langle F, x^\alpha \right\rangle & = (-1)^\pi \left\langle F, x^{\alpha'} \right\rangle \\
    \left\langle G, x^\alpha \right\rangle & = (-1)^\pi \left\langle G, x^{\alpha'} \right\rangle     
\end{align}
So define $\N_{\geq}^N$ to be the set of strictly decreasing non-negative integer vectors of length $N$, then we have:
\begin{align}
    \|F - G\|^2 
    & \geq N! \cdot \sum_{\alpha \in \N_\geq^N} \left(\left\langle F, x^\alpha \right\rangle - \left\langle G, x^\alpha \right\rangle\right)^2
\end{align}
Now, we can consider a flattening argument, by passing from tensors to matrices.  Define 
\begin{align}
    \mathfrak{A}_1 &= \{\beta \in \N_\geq^{N/2}: \beta_i \equiv 1 \mod 2\}\\
    \mathfrak{A}_2 &= \{\gamma \in \N_\geq^{N/2}: \gamma_i \equiv 0 \mod 2\}
\end{align} 

For $\beta \in \mathfrak{A}_1$ and $\gamma \in \mathfrak{A}_2$, let $\beta \cup \gamma \in \N^N$ be the concatenation of $\beta$ and $\gamma$.

Then given a function acting on $N$ particles such as $G$, we can map $G$ to a (infinite-dimensional) matrix $M$ indexed by the sets $\mathfrak{A}_1$ and $\mathfrak{A}_2$:
\begin{align}
    M(G) = \left[\langle G, x^{\beta \cup \gamma}\rangle \right]_{\beta, \gamma}
\end{align}
Let us calculate the entries of this matrix.  Let $\delta = (N-1, N-2,  \dots, 1,  0)$, and observe that:
\begin{align}
    \langle s_\lambda \cdot V, x^{\beta \cup \gamma} \rangle = \begin{cases} 
      \pm 1 & \lambda + \delta \sim \beta \cup \gamma~, \\
      0 & \text{otherwise.}
   \end{cases}
\end{align}

Note that ambiguity in sign depends on the sign of the permutation that maps $\lambda + \delta$ to $\beta \cup \gamma$.

By definition, $G$ is a sum of terms of the form $s_\lambda \cdot V$ where $\lambda$ is doubly even.  This implies that $\lambda + \delta = (2a_1+1, 2a_1, 2a_2+1, 2a_2, \dots)$ with $a_i > a_{i+1}$.  In other words, $\lambda + \delta \sim (\gamma + \one) \cup \gamma$ with $\gamma + \one \in \mathfrak{A}_1$ and $\gamma \in \mathfrak{A}_2$, where $\one$ is the all-ones vector.  See Figure~\ref{fig:three graphs} for an example.

It follows that we may write:
\begin{align}
    \langle G, x^{\beta \cup \gamma} \rangle = \begin{cases} 
      \pm \frac{C}{\sqrt{N!}}r^{\left(|\lambda| + \frac{N(N-1)}{2}\right)} & \beta = (\gamma + \one), \quad \lambda + \delta \sim (\gamma + \one) \cup \gamma~, \\
      0 & \text{otherwise.}
   \end{cases}
\end{align}

Suppose we index $M(G)$ such that the $i$th column is indexed by $\gamma^{(i)}$, and the $i$th row is indexed by $\gamma^{(i)} + \one$.  Then $M(G)$ is in fact a diagonal matrix.  And given the functional form of $G$, we have that the diagonal terms will include:
\begin{itemize}
    \item $\pm\frac{C}{\sqrt{N!}} r^{\left(0 + \frac{N(N-1)}{2}\right)}$ repeated $p(0)$ times,
    \item $\pm\frac{C}{\sqrt{N!}} r^{\left(4 + \frac{N(N-1)}{2}\right)}$ repeated $p(1)$ times,
    \item $\pm\frac{C}{\sqrt{N!}} r^{\left(8 + \frac{N(N-1)}{2}\right)}$ repeated $p(2)$ times,
    \item \dots
    \item $\pm\frac{C}{\sqrt{N!}} r^{\left(4k + \frac{N(N-1)}{2}\right)}$ repeated $p(k)$ times.

\end{itemize}

Second, let us consider $M(f_1 \otimes \dots \otimes f_N)$.  We can calculate the inner product of a rank-one function as the product of orbital inner products:
\begin{align}
    \langle f_1 \otimes \dots \otimes f_N, x^{\beta \cup \gamma} \rangle &= \prod_{n=1}^{N/2} \langle f_n, y^{\beta_n}\rangle \prod_{n=1}^{N/2} \langle f_{N/2 + n}, y^{\gamma_n} \rangle
\end{align}

where we introduce $y \in \mathbb{C}$ as a one-dimensional dummy variable for integration.

Define vectors $u \in \mathbb{C}^{|\mathfrak{A}_1|}$ and $v \in \mathbb{C}^{|\mathfrak{A}_2|}$ such that 
\begin{align}
    u_\beta &= \prod_{n=1}^{N/2} \langle f_n, y^{\beta_n}\rangle ~,\\
    v_\gamma &= \prod_{n=1}^{N/2} \langle f_{N/2 + n}, y^{\gamma_n}\rangle~.
\end{align}
Then it's clear that $M(f_1 \otimes \dots \otimes f_N) = uv^T$, i.e. it is rank-one.
Consequently, because $F$ is the sum of $L \cdot N!$ rank-one tensors, $M(F)$ will be rank at most $L \cdot N!$.

So we finally pass from tensors to matrices, and lower bound via the Frobenius norm $\|\cdot\|_F$:
\begin{align}
    \|F - G\|^2 
    & \geq N! \cdot \sum_{\alpha \in \N_\geq^N} \left(\left\langle F, x^\alpha \right\rangle - \left\langle G, x^\alpha \right\rangle\right)^2 \\
    & \geq N! \cdot \|M(F) - M(G)\|_F^2
\end{align}

Thus, because $M(F)$ is low-rank and $M(G)$ is chosen to be diagonal, we have an approachable infinite-dimensional matrix low-rank approximation problem.

\subsection{Deriving the bound}
By SVD, the optimal choice for $F$ is to produce a diagonal matrix $M(F)$ of rank $L \cdot N!$ with the maximal singular values of $G$ along the diagonal.
So it only remains to calculate these terms, and lower bound the approximation.

So suppose we choose $L \leq e^{N^2}$.  Noting that $N^N \leq e^{N^2}/14$ for $N\geq 6$:
\begin{align}
    L \cdot N! & \leq e^{N^2} N^N \\
    & \leq e^{2N^2} / 14 \\
    & \leq p(N^4)
\end{align}

where the last line follows from Corollary 3.1 in~\cite{maroti2003elementary}.

So clearly $L \leq e^{N^2}$ guarantees that $L \cdot N! \leq \sum_{k=0}^{N^4} p(k)$.

Thus, since $M(F)$ is constrained to have rank $\leq L \cdot N!$, it will be diagonal with $\leq \sum_{k=0}^{N^4} p(k)$ terms, so that:
\begin{align}
    \|F - G\|^2 & \geq N!\cdot\|M(F) - M(G)\|_F^2 \\
    & \geq N! \cdot \sum_{k=N^4 + 1}^\infty \left(\pm \frac{C}{\sqrt{N!}} r^{\left(4k + \frac{N(N-1)}{2}\right)} \right)^2 p(k) \\
    & = C^2 r^{N(N-1)}\sum_{k=N^4 + 1}^\infty r^{8k} p(k) \\
    & = 1 - C^2 r^{N(N-1)} \sum_{k=0}^{N^4} r^{8k} p(k)
\end{align}
where the last line follows as $C$ was chosen so that $C^2 r^{N(N-1)} \sum_{k=0}^{\infty} r^{8k} p(k) = 1$.

Note that 
\begin{align}
    \sum_{k=0}^{N^4} r^{8k} p(k) \leq \prod_{k=1}^{N^4} \frac{1}{1-r^{8k}}
\end{align}

as the LHS is the generating function for partitions $\lambda$ with $|\lambda| \leq N^4$, and the RHS is the generating function for partitions with all parts $\leq N^4$, which clearly dominates the LHS termwise.

So plugging back in the definition of $C = \left(r^{-N(N-1)}\prod_{k=1}^\infty 1-r^{8k}\right)^{1/2}$:
\begin{align}
    \|F - G\|^2 & \geq 1 - C^2 r^{N(N-1)} \prod_{k=1}^{N^4} \frac{1}{1-r^{8k}} \\
    & = 1 - \prod_{k=N^4+1}^\infty 1-r^{8k}~.
\end{align}
Finally, by choosing $r = 1 - \frac{1}{8N^4 + 8}$, we have:
\begin{align}
    1 - \prod_{k=N^4+1}^\infty 1-r^{8k} & \geq 1 - \left(1 - r^{8N^4 + 8}\right) \\
    & = \left(1 - \frac{1}{8N^4 + 8}\right)^{8N^4 + 8} \\
    & \geq \left(1 - \frac{1}{16}\right)^{16} \\
    & \geq \frac{3}{10}~,
\end{align}
where we use that the limit $\left(1 - \frac{1}{n}\right)^n$ increases monotonically in $n$.  Hence, we conclude:
\begin{align}
    \|F - G\|^2 & \geq \frac{3}{10}~.
\end{align}

\subsection{Efficiency of representing $G$}\label{sec:representing-g}

We remind the representation, where with $r = 1 - \frac{1}{8N^4 + 8}$ we have:
\begin{align}
    G = C \sqrt{N!} \cdot  \prod_{i<j} \frac{1}{1-r^4x_i^2x_j^2} \cdot \A(\phi_1^{(r)} \otimes \dots \otimes \phi_N^{(r)})
\end{align}
with
\begin{align}
    \phi_j^{(r)}(x_i) = \begin{cases} 
      rx_i((rx_i)^2)^{N/2-j}(1+(rx_i)^4)^{j-1} & 1 \leq j \leq N/2 ~,\\
      ((rx_i)^2)^{N-j}(1+(rx_i)^4)^{j-1-N/2} & N/2+1 \leq j \leq N~.
   \end{cases}
\end{align}

So it remains to characterize some $\hat{G}$ in the Jastrow ansatz, parameterized with neural networks, that approximates $G$.


Given inputs of the form $z \in \mathbb{C}^n$, we consider network layers of the form $z \mapsto \sigma(Az + B\conj{z} + c)$ where $A, B, c$ are learnable weights, and $\sigma$ is an analytic activation function with coefficients that decay at least factorially (such as $\exp$, $\sinh$ or $\sin$).  It is easy to see by the complex Stone-Weierstrauss theorem that linear combinations of these terms, i.e. two layer networks, are universal approximators for continuous complex-valued functions.

Observe further, the following fact:

\begin{lemma}\label{lem:unity}
    Fix $J$ and let $\gamma$ be a primitive $J$th root of unity.  Then
    \begin{align}
    \frac{1}{J}\sum_{j=0}^{J-1} \gamma^{ij} = 
        \begin{cases} 
      1 & i \equiv 0 \mod J \\
      0 & i \not\equiv 0 \mod J
  \end{cases}
    \end{align}
\end{lemma}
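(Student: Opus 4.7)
The plan is a standard two-case argument based on the geometric series formula, keyed off whether $\gamma^i$ equals $1$ or not.

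First I would observe that the sum in question is a geometric series in $\gamma^i$, namely $\sum_{j=0}^{J-1} (\gamma^i)^j$. The behavior of this sum depends entirely on whether the ratio $\gamma^i$ equals $1$. Since $\gamma$ is a primitive $J$th root of unity, the key fact I would invoke is that $\gamma^i = 1$ if and only if $J \mid i$, equivalently $i \equiv 0 \pmod{J}$.

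In the first case, $i \equiv 0 \pmod{J}$, every term in the sum equals $1$, so $\sum_{j=0}^{J-1} \gamma^{ij} = J$ and dividing by $J$ gives $1$. In the second case, $i \not\equiv 0 \pmod{J}$, so $\gamma^i \neq 1$, and the standard geometric series identity yields
\begin{equation*}
\sum_{j=0}^{J-1} (\gamma^i)^j = \frac{(\gamma^i)^J - 1}{\gamma^i - 1} = \frac{(\gamma^J)^i - 1}{\gamma^i - 1} = \frac{1 - 1}{\gamma^i - 1} = 0,
\end{equation*}
where I use $\gamma^J = 1$ in the numerator. Dividing by $J$ preserves the zero.

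There is no substantive obstacle here; the only thing to be careful about is justifying that $\gamma^i \neq 1$ in the second case, which follows immediately from $\gamma$ being a primitive $J$th root of unity (by definition, $J$ is the smallest positive integer with $\gamma^J = 1$, so no smaller positive exponent yields $1$, and combining with periodicity $\gamma^{i+J} = \gamma^i$ gives the iff statement above).
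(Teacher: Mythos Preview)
Your proof is correct; this is the standard geometric-series argument for orthogonality of roots of unity. The paper itself does not supply a proof of this lemma at all --- it simply states it as a known fact and moves on --- so there is nothing to compare against beyond noting that your argument is exactly what any reader would fill in.
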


We will use this fact to exponentially well-approximate all the components of the hard function $G$.  

\begin{lemma}\label{lem:expnet}
     For any $k, J \in \N$ with $2ek < J$, there exists a shallow neural network $f^{(k)}$ using either the $\exp, \sinh, \sin$ activations,  with $O(J)$ neurons and O(k) weights, such that
    \begin{align}
        \sup_{|\xi| \leq 2}\left|f^{(k)}(\xi) - \xi^k\right| &\leq 2 \left(\frac{2ek}{J}\right)^J\\
    \end{align}
\end{lemma}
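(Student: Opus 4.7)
The plan is to use the roots-of-unity projection of Lemma~\ref{lem:unity} to isolate the degree-$k$ monomial from the Taylor expansion of an analytic activation. Let $\gamma = e^{2\pi i/J}$ and define
\begin{align*}
    f^{(k)}(\xi) \ := \ \frac{k!}{J} \sum_{j=0}^{J-1} \gamma^{-kj} \exp(\gamma^j \xi)~,
\end{align*}
which is a shallow network with $J$ neurons (one per $j$), input weights $\gamma^j$, and output weights $k!\gamma^{-kj}/J$. An entirely analogous construction works for $\sinh$ and $\sin$, whose Taylor series is supported on the odd integers, once the parity of $k$ and $J$ is matched appropriately.

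Expanding $\exp(\gamma^j \xi) = \sum_{n\ge 0} \gamma^{jn}\xi^n/n!$ and swapping the order of summation, the inner sum $\frac{1}{J}\sum_j \gamma^{j(n-k)}$ is handled by Lemma~\ref{lem:unity}, which kills every $n$ not congruent to $k$ modulo $J$. Thus
\begin{align*}
    f^{(k)}(\xi) \ = \ k!\sum_{n\equiv k\,(\mathrm{mod}\, J)} \frac{\xi^n}{n!} \ = \ \xi^k \ + \ k!\sum_{m\ge 1}\frac{\xi^{k+mJ}}{(k+mJ)!}~,
\end{align*}
so the uniform error on $|\xi|\le 2$ reduces to bounding the tail $k!\sum_{m\ge 1} 2^{k+mJ}/(k+mJ)!$.

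The key estimate for the leading $m=1$ term uses $(k+J)! \ge J!\cdot J^k$ (from $\prod_{i=1}^k (J+i)\ge J^k$), $k!\le k^k$, and the Stirling bound $J!\ge (J/e)^J$, giving
\begin{align*}
    \frac{k!\, 2^{k+J}}{(k+J)!} \ \le \ \frac{k^k\, 2^{k+J}}{(J/e)^J\, J^k} \ = \ \left(\frac{2k}{J}\right)^{\!k}\!\left(\frac{2e}{J}\right)^{\!J} \ \le \ \left(\frac{2ek}{J}\right)^{\!J}~,
\end{align*}
where the final inequality uses the hypothesis $2ek<J$ to ensure $(2k/J)^k\le 1$. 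The higher-$m$ terms decay super-geometrically, since the ratio between consecutive tail terms is at most $2^J/(mJ)^J$; summing the resulting geometric series contributes at most an extra factor of $2$, which matches the prefactor in the claimed bound $2(2ek/J)^J$.

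The main obstacle I anticipate is calibrating the Stirling-type estimate tightly enough to produce exactly the stated $2(2ek/J)^J$ rather than a weaker $(2e/J)^J$. The hypothesis $2ek<J$ provides precisely the slack needed to absorb the $(2k/J)^k$ factor, and any looseness in the intermediate inequalities is easily covered by the geometric decay of the higher tail terms. Regarding network size, the construction uses $J$ exponential neurons (hence $O(J)$ neurons), with the only nontrivial scalar parameter being the prefactor $k!$ together with the $J$-th roots of unity weights, which is $O(k)$ in the sense of independent numerical content once the layer structure is fixed.
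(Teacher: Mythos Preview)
Your core argument—the roots-of-unity filter applied to the Taylor expansion, followed by a Stirling-type tail bound—is exactly the paper's approach, and your error analysis for the $\exp$ activation is correct. One cosmetic difference: the paper scales the \emph{input} by $t=(k!)^{1/k}$ rather than scaling the output by $k!$; since $|t|\le k$ by Stirling, this keeps every weight bounded by $k$, which is what the ``$O(k)$ weights'' clause means. Your variant has output weights of magnitude $k!/J$, which is not $O(k)$, so it does not meet that part of the statement as written. This is easily repaired by adopting the input-scaling normalization.

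There is a genuine gap in your handling of $\sin$ and $\sinh$. Their Taylor series vanish on the even integers, so for even $k$ no choice of $J$ can make the roots-of-unity filter produce $\xi^k$: the filtered series is identically zero on the relevant residue class, and ``matching parity of $k$ and $J$'' does not rescue this. The paper's fix is to observe that a $\sin$-network can realize the activation $\xi\mapsto \sin\xi + \sin(\pi/2-\xi)=\sin\xi+\cos\xi$, whose Taylor coefficients are all nonzero with $|c_i|=1/i!$, after which the $\exp$ argument applies verbatim; an analogous shift handles $\sinh$ via $\sinh(\xi+1)=\cosh 1\,\sinh\xi+\sinh 1\,\cosh\xi$. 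You should replace the parity remark with this bias-shift trick.
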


\begin{proof}

    The result is obvious  if $k = 0$, so assume $k \geq 1$.

    Let $\gamma$ be a primitive $J$th root of unity, $t \in \mathbb{C}$ some constant, and let $k \in \N$ such that $0 \leq k \leq J-1$.  Let $\sigma$ denote a chosen activation, with Taylor expansion $\sigma(z) = \sum_{i=0}^\infty c_i z^i$.
    
    By applying Lemma~\ref{lem:unity} we can define a network $f^{(k)}$ and expand as:
    \begin{align}
        f^{(k)}(\xi) & := \sum_{j=0}^{J-1} \frac{\gamma^{-kj}}{J} \sigma(\gamma^j t \xi) \\
        & = \sum_{j=0}^{J-1} \frac{\gamma^{-kj}}{J} \sum_{i=0}^\infty c_i(\gamma^j t \xi)^i\\
        & = \sum_{i=0}^\infty c_i (t\xi)^i \left[\frac{1}{J} \sum_{j=0}^{J-1} \gamma^{(i-k)j}\right] \\
        & = \sum_{i=0}^\infty c_i (t\xi)^i \ind_{i \equiv k \mod J} \\
        & = \sum_{i=0}^\infty c_{iJ + k} (t\xi)^{iJ+k} \\
        & = c_{k} (t\xi)^{k} + \sum_{i=1}^\infty c_{iJ + k} (t\xi)^{iJ+k}
    \end{align}

    Let's choose $t = c_k^{-1/k}$. 
 Therefore, it follows:

    \begin{align}
        \sup_{|\xi| \leq 1} |f^{(k)}(\xi) - \xi^k| \leq \sum_{i=1}^\infty |c_{iJ + k}| |(2t)^{iJ + k}|
    \end{align}

    If $\sigma(\xi) = \exp(\xi)$, it follows $c_k = \frac{1}{k!}$.  So by Stirling's approximation we have $|t| \leq k$ and therefore if $J > 2ek$, then:

    \begin{align}
        \sum_{i=1}^\infty |c_{iJ + k}| |(2t)^{iJ + k}| & \leq \sum_{i=1}^\infty \frac{(2k)^{iJ + k}}{(iJ+k)!} \\
        & \leq \sum_{i=1}^\infty \left(\frac{2ek}{J}\right)^{iJ}\\
        & \leq 2 \left(\frac{2ek}{J}\right)^J
    \end{align}

    We cannot apply this argument directly for the $\sin$ activation, because it is an odd function, and $c_k = 0$ if $k$ is even.  However, a subset of networks with the $\sin$ activations is networks with the activation $\sigma(\xi) = \sin \xi + \sin(\pi/2 - \xi) = \sin \xi + \cos \xi$, for which the above argument applies as before.
    
    Similarly, using the identity $\sinh(\xi + 1) = \sinh \xi \cosh 1 + \cosh \xi \sinh 1$, the same bound applies with the activation $\sinh$.

\end{proof}







We consider first the Jastrow factor of $G$.  We will approximate it in $\hat{G}$ using a Relational Network~\citep{santoro2017simple} with multiplication pooling.  In what follows, we consider the infinity norm restricted to the unit complex circle.

Define $y_{ij} = f^{(2)}(x_i + x_j) - f^{(2)}(x_i) - f^{(2)}(x_j)$.  Note that because $x_ix_j = (x_i+x_j)^2 - x_i^2 - x_j^2$, by Lemma~\ref{lem:expnet}, we have for an appropriate choice of $J$:

\begin{align}
\|y_{ij} - x_ix_j\|_\infty & \leq \|f^{(2)}(x_i + x_j) - (x_i + x_j)^2\|_\infty + \|f^{(2)}(x_i) - x_i^2\|_\infty + \|f^{(2)}(x_j) - x_j^2\|_\infty \\
& \leq 6 \left(\frac{4e}{J}\right)^J
\end{align}

Clearly it follows that for sufficiently large $J$, $\|y_{ij}\|_\infty \leq 2$.  Therefore, applying Lemma~\ref{lem:expnet} again we have

\begin{align}
    \left\|f^{(2k)}(y_{ij}) - x_i^{2k}x_j^{2k}\right\|_\infty
    & \leq \left\|f^{(2k)}(y_{ij}) - y_{ij}^{2k}\right\|_\infty + \left\|y_{ij}^{2k} - x_i^{2k}x_j^{2k}\right\|_\infty \\
    & \leq 2 \left(\frac{2ek}{J}\right)^J + \sum_{l=0}^{2k-1} \| y_{ij}^{2k-l}(x_ix_j)^l - y_{ij}^{2k-l-1}(x_ix_j)^{l+1}\|_\infty \\
    & \leq 2 \left(\frac{2ek}{J}\right)^J + \sum_{l=0}^{2k-1} \|y_{ij}^{2k-l-1}\|_\infty \|(x_ix_j)^l \|_\infty \|y_{ij} - x_ix_j\|_\infty \\
    & \leq 2 \left(\frac{2ek}{J}\right)^J + 2k \cdot 2^{2k} \cdot 6 \left(\frac{4k}{J}\right)^J
\end{align}

Now, consider a network $g$ that takes in two inputs, defined via

\begin{align}
    g(x_i, x_j) = 1 + \sum_{k=1}^{K} r^{4k} f^{(2k)}\left(f^{(2)}(x_i + x_j) - f^{(2)}(x_i) - f^{(2)}(x_j) \right)
\end{align}

Then it follows that:

\begin{align}
    \left\|\frac{1}{1 - r^4 x_i^2 x_j^2} - g(x_i,x_j)\right\|_\infty
    & \leq \sum_{k=1}^K r^{4k} O\left(k2^{2k} \left(\frac{2ek}{J}\right)^J \right) + \left\|\sum_{k=K+1}^\infty (r^4x_i^2x_j^2)^k\right\|_\infty \\
    & \leq O\left(K^2 2^{2K} \left(\frac{2eK}{J}\right)^J\right) + O\left (\frac{r^{4K}}{1-r}\right) =: \delta_1
\end{align}

Let us assume we choose $J, K$ such that $\delta_1 \leq 1$.  One can confirm that $\max\left(\left\|\frac{1}{1 - r^4 x_i^2 x_j^2}\right\|_\infty, \|g(x_i,x_j)\|_\infty \right) \leq \frac{1}{1-r} + \delta_1 \leq \frac{2}{1-r}$.  So it follows from routine calculation that
\begin{align}
    \left\|\prod_{i<j} \frac{1}{1 - r^4 x_i^2 x_j^2} - \prod_{i<j} g(x_i,x_j)\right\|_\infty \leq N \left(\frac{2}{1-r}\right)^{N^2 - 1}\delta_1
\end{align}

Consider second the antisymmetric factor.  Following the row transforms given in the proof of Lemma 3.4 in~\cite{ishikawa2006generalizations}, the antisymmetric term may be equivalently written as:
\begin{align}
    \A(\phi_1^{(r)} \otimes \dots \otimes \phi_N^{(r)}) = \A(\psi_1^{(r)} \otimes \dots \otimes \psi_N^{(r)})
\end{align}

with
\begin{align}
    \psi_j^{(r)}(x_i) = \begin{cases} 
      rx_i((rx_i)^2)^{N/2-j}(1+(rx_i)^{4(j-1)}) & 1 \leq j \leq N/2 ~,\\
      ((rx_i)^2)^{N-j}(1+(rx_i)^{4(j-1-N/2)}) & N/2+1 \leq j \leq N~.
   \end{cases}
\end{align}

Expanding, we see that $\psi_j^{(r)}$ is a polynomial in degree $\leq 3N$ with only two non-zero coefficients, each bounded by $1$.  So by Lemma~\ref{lem:expnet} it is easy to construct networks $\hat{\psi}_j$ such that

\begin{align}
    \|\psi_j^{(r)} - \hat\psi_j\|_\infty \leq O\left(\left(\frac{6eN}{J}\right)^J \right) =: \delta_2
\end{align}.

Choose $J$ to ensure $\delta_2 \leq 1$, then we also clearly have that $\max\left(\|\hat\psi_j^{(r)}\|_\infty, \|\hat\psi_j\|_\infty \right) \leq 2 + \delta_2 \leq 3$.  Now, we calculate:
{\small 
\begin{align}
    \left\|\A(\psi_1^{(r)} \otimes \dots \otimes \psi_N^{(r)}) - \A(\hat\psi_1 \otimes \dots \otimes \hat\psi_N)\right\|_\infty & =  \left\|\frac{1}{N!} \sum_{\sigma}  (-1)^\sigma \left(\psi_{\sigma(1)}^{(r)} \otimes \dots \otimes \psi_{\sigma(N)}^{(r)} - \hat\psi_{\sigma(1)} \otimes \dots \otimes \hat\psi_{\sigma(N)}\right) \right\|_\infty \\
    & \leq \frac{1}{N!} \sum_{\sigma} \left\|  \left(\psi_{\sigma(1)}^{(r)} \otimes \dots \otimes \psi_{\sigma(N)}^{(r)} - \hat\psi_{\sigma(1)} \otimes \dots \otimes \hat\psi_{\sigma(N)}\right) \right\|_\infty \\
    & < N3^{N-1} \delta_2
\end{align}}

Finally, we combine the Jastrow factor and antisymmetric component.  Let
\begin{align}
    \hat G(x) = C \sqrt{N!} \prod_{i<j} g(x_i,x_j) \A(\hat\psi_1 \otimes \dots \otimes \hat\psi_N)(x)~.
\end{align}

Again, we have the simple bounds $\left\|\A(\psi_1^{(r)} \otimes \dots \otimes \psi_N^{(r)})\right\|_\infty \leq 2^N$ and $\left\|\prod_{i<j}\frac{1}{1 - r^4 x_i^2 x_j^2}\right\|_\infty \leq \left(\frac{2}{1-r}\right)^{N^2}$.

Then we calculate:
\begin{align}
    \|G - \hat{G}\|_\infty & = C \sqrt{N!} \left\|\prod_{i<j} \frac{1}{1-r^4x_i^2x_j^2} \cdot \A(\phi_1^{(r)} \otimes \dots \otimes \phi_N^{(r)}) - \prod_{i<j} g(x_i, x_j) \cdot \A(\hat\psi_1 \otimes \dots \otimes \hat\psi_N)\right\|_\infty \\
    & \leq C \sqrt{N!} \left(N3^{N-1} \delta_2 \left(\frac{2}{1-r}\right)^{N^2} + N \left(\frac{2}{1-r}\right)^{N^2 - 1}\delta_1 2^N \right) \\
    & \leq \sqrt{N!} N 3^N \left(\frac{2}{1-r}\right)^{N^2} (\delta_1 + \delta_2)
\end{align}

From the choice $r = 1 - \frac{1}{8N^4 + 8}$ and the assumption that $N \geq 6$, we can further bound

\begin{align}
    \|G - \hat{G}\|_\infty
    & \leq N^{2N} (9N^4)^{N^2} (\delta_1 + \delta_2) \\
    & \leq N^{5N^2}(\delta_1 + \delta_2)
\end{align}

Choosing $J \geq 12eK$, and $K \geq 2$ so that $K^2 \leq 2^K$, we recall that

\begin{align}
\delta_1 + \delta_2 &\leq O\left(K^2 2^{2K} \left(\frac{6eK}{J}\right)^J\right) + O\left (\frac{r^{4K}}{1-r}\right) \\
& \leq O\left(2^{3K-J}\right) + O\left (N^4\left(1 - \frac{1}{N^4} \right)^{4K}\right) \\
& \leq O\left(2^{-9K}\right) + O\left (N^4e^{-4K/N^4}\right)\\
& \leq O\left (N^4e^{-4K/N^4}\right)
\end{align}

because the right-most term dominates in the second-to-last line.

Hence, if we'd like to control the error $\|G - \hat{G}\|_\infty$ by some $\epsilon$, we require that for some universal constant $C$, 

\begin{align}
    \epsilon \geq C N^{5N^2 + 4} e^{-4K/N^4}
\end{align}

Note that for $N \geq 6$, we have $N^{5N^2 + 4} \leq e^{N^3}$, and therefore it suffices to choose $K$ such that

\begin{align}
    \epsilon \geq C e^{N^3 - 4K/N^4}
\end{align}

And this condition is equivalent to the bound

\begin{align}
    K \geq \frac{1}{4} \left(N^4 \log \frac{C}{\epsilon} + N^7 \right)
\end{align}

Note that $J$ is subject to the same bound up to constant factors.
\section{Experiments}

We illustrate the nature of this exponential separation in finite case, specifically by seeking to learn our hard function in the Slater ansatz and Jastrow ansatz with $N=4,6$ particles.  In particular, we try to learn the hard function $G$ with the two given ansätze (rescaled with the $C$ term as this constant is extremely small to give normalization in the $L_2$ norm).

\begin{figure*}[h]
\centering

\begin{subfigure}{0.5\textwidth}
  \centering
  \includegraphics[width=1.\linewidth]{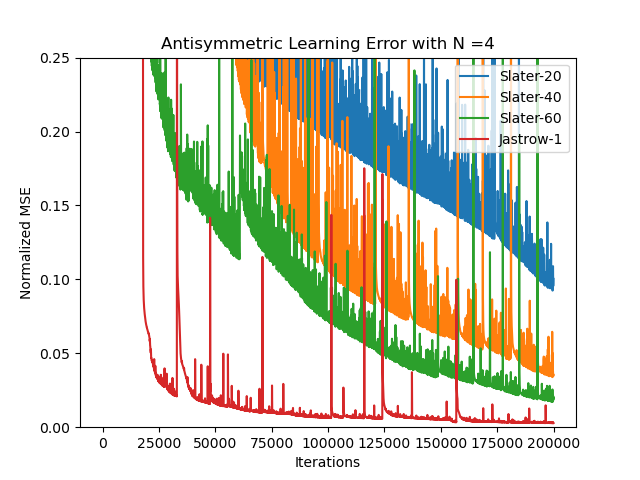}
  \caption{All training iterations with $N=4$}
\end{subfigure}%
\begin{subfigure}{0.5\textwidth}
  \centering
  \includegraphics[width=1.\linewidth]{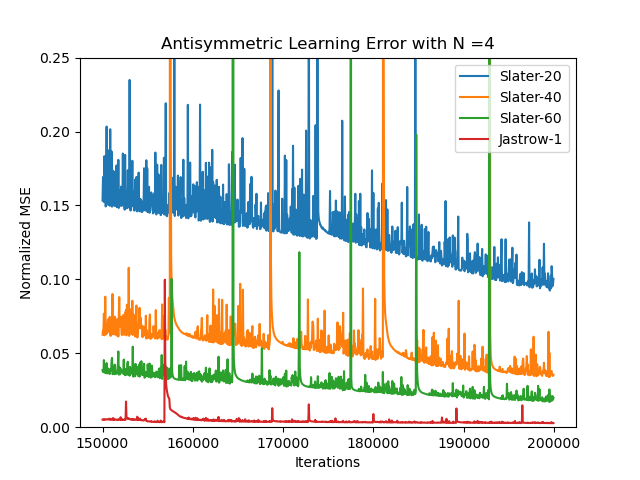}
  \caption{Last quarter of iterations with $N=4$}
\end{subfigure}%

\begin{subfigure}{0.5\textwidth}
  \centering
  \includegraphics[width=1.\linewidth]{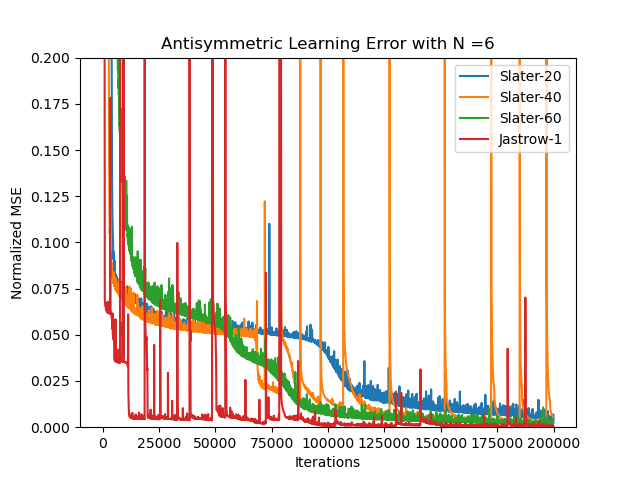}
  \caption{All training iterations with $N=6$}
\end{subfigure}%
\begin{subfigure}{0.5\textwidth}
  \centering
  \includegraphics[width=1.\linewidth]{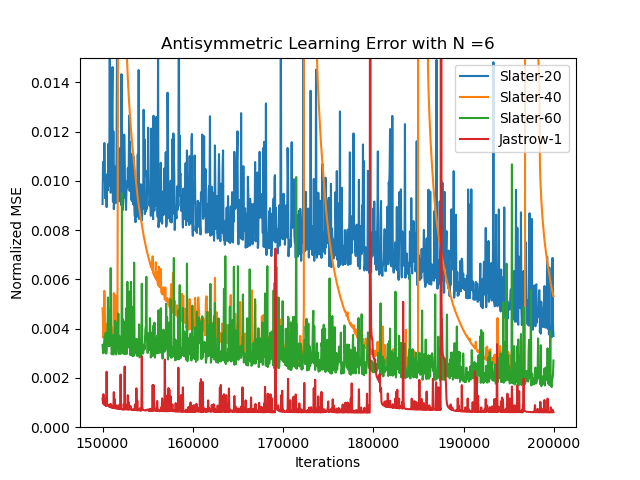}
  \caption{Last quarter of iterations with $N=6$}
\end{subfigure}%

\caption{Training MSE for Slater ansatz of varying number of determinants and Jastrow ansatz of one determinant.}
\label{fig:plot}
\end{figure*}

We consider the empirical approximation of mean squared training error in $\|\cdot\|$, where we compare learning with the Slater ansatz and a large number of determinants, vs. the Jastrow ansatz with a single determinant.  We parameterize each orbital with a three layer neural network with hidden width 30, and activation the complex ReLU that maps $a + bi \mapsto ReLU(a) + ReLU(b)i$. We further parameterize the Jastrow term with the symmetric architecture of a Relational Network~\citep{santoro2017simple} and multiplication pooling where all networks are three layers and all hidden widths are also 30.  This is justified by the structure of the Jastrow term in the definition of $G$, which is a symmetric product of terms depending on pairs of particles. 

Learning rate is set to 0.0005 in all runs, for 200000 iterations of full batch gradient descent on 10000 samples drawn i.i.d. from the complex unit circle.  We plot the normalized MSE, i.e. MSE divided by the error attained by the naive zero function.

The results are given in Figure~\ref{fig:plot}.  Each Slater ansatz is labeled by the number of determinants it is parameterized with.  We observe that one Jastrow determinant suffices where a large number of Slater determinants fails to achieve low MSE, alluding to the exponential nature of the separation as the number of particles $N$ increases.

\section{Discussion}

\subsection{Proof limitations}

The proof technique relies on finding a symmetric function that is supported exclusively on doubly even Schur polynomials.  This is established in the Pfaffian identity given in Theorem~\ref{thm:pfaffian}.  This yields a function that requires exponentially many Slater determinants but may be written exactly in Jastrow form.

However, the large magnitude of the Jastrow factor precludes efficient approximation in the infinity norm.  This cannot be overcome by changing the value of $r$: as $r$ approaches 1, the magnitude of support on high dimensional doubly even Schur polynomials increases while simultaneously the magnitude of the Jastrow factor increases.  So we must choose a sufficiently large $r$ in order to guarantee the induced matrices are effectively high-rank.

An alternative tactic would be to control approximation in the $L_2$ norm given by $\|\cdot\|$.  However, calculating the $L_2$ norm is most easily done after decomposing into the orthogonal basis of multinomials, which is challenging when multiplying terms in the Jastrow product.

This proof also only uses 1-dimensional particles to evince a separation.  A nearly identical proof could be employed for higher-dimensional particles by only utilizing the first component, but there would be no dependence on the dimension $d$.  Understanding the simultaneous dependence on $N$ and $d$ would therefore require a new proof technique.

\subsection{Open Questions}

The main result of this work represents a first step in understanding separations between relevant antisymmetric ansätze.  We conclude with a discussion of the natural open questions in this domain.

\paragraph{Practical Wavefunctions}  Extending the analysis to consider wavefunctions that appear in more practical applications would be informative.  For example, in the one-dimensional case, famously the eigenfunctions of the Sutherland model are known~\citep{langmann2005method}, and the representability of these functions in particular is an open question.

\paragraph{More Powerful Ansätze} The separation we demonstrate in this work is between the two simplest ansätze.  Demonstrating separations among the more expressive network architectures, for example the backflow or hidden fermion models discussed previously, would prove the merit of these more complicated and time-intensive methods.

\paragraph{Learnability separations} Our current separation concerns exclusively the approximation properties of the two parametric families of antisymmetric functions, and as such neglects any optimization question. It would be interesting to integrate the optimization aspect in the separation, similarly as in \citep{safran2022optimization} for fully-connected networks.

\paragraph{Acknowledgements:} We are thankful of discussions with Giuseppe Carleo and Michael Lindsey. This work is partially funded by NSF CAREER CIF-1845360,  NSF CCF-1814524 and NSF DMS-2134216.

\bibliography{refs}
\bibliographystyle{iclr2023_conference}

\end{document}